\begin{document}
\title{Generally-Occurring Model Change for Robust Counterfactual Explanations}

\author{Ao Xu\orcidID{0009-0009-9071-9070} \and
Tieru Wu\thanks{Corresponding author}\orcidID{0000-0003-3397-1885} }
\authorrunning{A. Xu and T. Wu}

\institute{School of Artificial Intelligence, Jilin University, Changchun, China\\
\email{xuao22@mails.jlu.edu.cn, wutr@jlu.edu.cn}
}

\maketitle           
\begin{abstract}
With the increasing impact of algorithmic decision-making on human lives, the interpretability of models has become a critical issue in machine learning. Counterfactual explanation is an important method in the field of interpretable machine learning, which can not only help users understand why machine learning models make specific decisions, but also help users understand how to change these decisions. Naturally, it is an important task to study the robustness of counterfactual explanation generation algorithms to model changes.
Previous literature has proposed the concept of Naturally-Occurring Model Change, which has given us a deeper understanding of robustness to model change. In this paper, we first further generalize the concept of Naturally-Occurring Model Change, proposing a more general concept of model parameter changes, Generally-Occurring Model Change, which has a wider range of applicability. We also prove the corresponding probabilistic guarantees. In addition, we consider a more specific problem, data set perturbation, and give relevant theoretical results by combining optimization theory.

\keywords{Explainable Artificial Intelligence \and Counterfactual Explanation}
\end{abstract}

\section{Introduction}
\subsection{Background}
E\textbf{X}plainable \textbf{A}rtificial \textbf{I}ntelligence (\textbf{XAI}) \cite{arrieta2020explainable} is a crucial research domain aimed at enhancing the transparency of machine learning algorithms to make their decision-making and predictions more comprehensible and interpretable\cite{machlev2022explainable}. The rise of complex models like deep learning necessitates \textbf{XAI}. These models, often viewed as ``black boxes", present challenges in understanding their internal decision-making processes. Unlike traditional machine learning focused solely on statistical prediction, \textbf{XAI} research centers on explaining the behaviour and outcomes of known AI models.
\textbf{XAI} development encompasses diverse techniques: building interpretable models, calculating feature importance metrics, and creating visualization tools to elucidate model rationale. 
\textbf{XAI} not only aims to enhance the credibility of models but also facilitates compliance with regulatory requirements, enhances user trust, and promotes the transfer of domain knowledge \cite{rasheed2022explainable}. 
Research in this field continues to evolve to ensure that the decisions of machine learning systems become more transparent and trustworthy, addressing societal, legal, and ethical challenges.

Counterfactual explanation method is a significant area within the field of \textbf{XAI}, employed to elucidate the decision-making processes of machine learning and deep learning models \cite{stepin2021survey,wachter2017counterfactual}. This approach aims to address the sensitivity of decision outcomes to input data perturbations.
Counterfactual explanations reveal model behavior by contrasting its predictions on actual and hypothetical inputs.
Ideally, counterfactual instances should have similar features to the input instance being explained, but with a different label. Modifying feature values can shift the counterfactual to a distinct region of the feature space, potentially breaching the decision boundary. This manipulation exposes the model's decision-making process and the impact of decision boundaries on outputs. By comparing actual and counterfactual scenarios, users gain insights into model sensitivity.
Counterfactual explanations enhance the transparency and interpretability of machine learning models, addressing the issue of opacity and increasing model credibility and acceptability. Moreover, counterfactual explanations play a crucial role in domains requiring high levels of interpretability and security, such as healthcare \cite{prosperi2020causal}, finance \cite{gan2021automated}, and autonomous driving \cite{omeiza2021explanations}. Thus, counterfactual explanations hold significant importance in the construction of trustworthy and controllable AI systems.
For various interpretable machine learning models, numerous counterfactual explanation generation algorithms have been developed \cite{brughmans2023nice,guidotti2022counterfactual}.

Current counterfactual explanation methods, however, exhibit limitations in robustness \cite{xu2024weak}. This fragility can lead to inaccurate decision comprehension, misleading explanations, and inappropriate decision modifications. Moreover, explanations lacking robustness may fail to adapt to external circumstances or changes in specific contexts, limiting their breadth and effectiveness in practical applications \cite{artelt2021evaluating,laugel2019issues}. Therefore, researching the robustness of counterfactual explanation generation algorithms has become an urgent task, providing a solid theoretical foundation for constructing more reliable decision support systems. Currently, there have been numerous studies on the robustness of counterfactual generation algorithms. 

\subsection{Related Works}
Saumitra Mishra et al. \cite{mishra2021survey} meticulously categorize the robustness of counterfactual explanation generation algorithms into the following three classes: robustness to model change, robustness to input perturbations and robustness to counterfactual input changes.
Robustness to model change refers to the issue of how the effectiveness of counterfactual explanations evolves when a machine learning model is retrained or when its training parameters are slightly modified. Our paper will focus on the research of this type of robustness. Kaivalya Rawal et al. \cite{rawal2020algorithmic} introduced a novel approach to assess the effectiveness of algorithmic resources in the evaluation of robustness, taking into account the impact of data and model shifts. Through both experimental and theoretical analyses, they elucidated the trade-off between recourse invalidation probability and cost minimized by contemporary recourse generation algorithms.
Emily Black et al. \cite{black2021consistent} concentrated on neural network models and proposed a new method for evaluating the consistency of counterfactual instances in deep models. They also introduced a novel training approach to enhance model consistency.
Andrea Ferrario et al. \cite{ferrario2022robustness} presented a technique termed ``counterfactual data augmentation" to bolster the robustness of counterfactual explanations. To improve the reliability of counterfactual explanations, we augment the training data with counterfactual samples, which enables the model to adapt more effectively to new scenarios.
Simultaneously, Sohini Upadhyay et al. \cite{upadhyay2021towards} 
developed a framework called Robust Algorithmic 
Recourse, which leverages adversarial training to discover resources that exhibit robustness to model variations. The aforementioned endeavors have collectively deepened our understanding of robustness to model change. In particular, our work is closely related to the work of Faisal Hamman et al. \cite{pmlr-v202-hamman23a}. They introduced the concept of Naturally-Occurring Model Change and offered theoretical guarantees, thereby extending the mathematical implications of robustness to model change. Additionally, they defined a stability metric based on this concept and devised two algorithms, T-Rex:I and T-Rex:NN, to generate relatively stable counterfactual explanations. Their experimental results demonstrated superior performance compared to the baseline. Notably, the robustness of counterfactuals is directly related to whether they are defined in a causal sense. Genuine counterfactuals, derived from a causal understanding of data-generating mechanisms, remain robust despite distributional shifts \cite{beckers2022causal}.

\subsection{Our Paper's Contributions} 
In this theoretical paper, our contributions are as follows:
\begin{enumerate}
    \item We first further generalize the concept of Naturally-Occurring Model Change and propose a more universally applicable concept of model parameter change, Generally-Occurring Model Change. We also provide the corresponding probabilistic guarantees.
    \item Additionally, we investigate the specific case of the dataset perturbation problem. By combining the Generally-Occurring Model Change concept with optimization theory, we present novel theoretical results.
\end{enumerate}

\section{Preliminaries}
Our basic notations are consistent with those in \cite{pmlr-v202-hamman23a}. We assume that the instance space is $\mathcal{X}\subseteq \mathbb{R}^d$, $\mathcal{D}$ is a distribution over $\mathcal{Z}=\mathcal{X}\times [0,1]$ and $m(x):\mathcal{X}\longrightarrow  [0,1]$ is the original machine learning model, which can be regarded as a probabilistic model that finally makes decisions based on the results of $\mathbb{I}\left(m(x)>0.5\right )$.  Denote $\mu$ the marginal of $\mathcal{D}$ over $\mathcal{X}$.  In this paper, we always assume that the following Lipschitz condition holds:
\begin{align}
\label{eq:mLip}
    | m (x) - m (y) | \leqslant \gamma_m \cdot \| x - y \|_2 ,
\end{align}
where $x,y\in \mathcal{X}$, $\gamma_m$ is a positive constant and $\|\cdot\|_2$ denotes the Euclidean norm.
\subsection{Counterfactual Explanations}
The solution of counterfactuals is generally associated with some norm or metric on $\mathcal{X}$. In general, we can define the following concept using the norm $ \|  \cdot  \| $ on $\mathcal{X}$:
\begin{definition}[Counterfactuals $\bar x^*$ induced by norm $\|\cdot\|$]\label{def:CE}
     For any $x\in \mathcal{X}$ satisfying $m(x) < 0.5$, its counterfactual with respect to $m(\cdot)$ and $\|\cdot\|$ is:
     \begin{align*}
    \bar x^*:= \underset{\bar x \in \mathbb{R}^d}{\operatorname{argmin }} \    \|x-\bar x\| \text{ s.t. }m(\bar{x})\geq 0.5.
    \end{align*}
\end{definition}
The selection of an appropriate metric or norm is a meaningful problem, and there has been extensive research in the literature on this topic. For example, researchers have studied the selection of the $l_1$-norm \cite{pawelczyk2020counterfactual}, the $l_2$-norm \cite{kenny2021generating}, the Mahalanobis distance \cite{kanamori2020dace}, and so on.

However, as pointed out by \cite{albini2022counterfactual,kanamori2020dace}, counterfactuals solved solely based on a certain metric or norm may deviate significantly from the potential distribution of the data, which is very unreasonable. For example, in a medical scenario, the counterfactual solved requires the height of the input instance to change by $3$ meters, which is completely unrealistic. Therefore, Faisal Hamman et al. \cite{pmlr-v202-hamman23a} defined the following concept:

\begin{definition}[Closest Data-Manifold Counterfactuals $\bar x^*$ induced by norm $\|\cdot\|$ \cite{pmlr-v202-hamman23a}]\label{def:MCE}
     For any $x\in \mathcal{X}$ satisfying $m(x) < 0.5$, its closest data manifold counterfactual with respect to $m(\cdot)$ and $\|\cdot\|$ is:
     \begin{align*}
    \bar x^*:= \underset{\bar x \in \mathcal{X}}{\operatorname{argmin }} \    \|x-\bar x\| \text{ s.t. }m(\bar{x})\geq 0.5.
    \end{align*}
\end{definition}
A closely related metric to the Definition \ref{def:MCE} is the Local Outlier Factor \cite{inproceedings}, which has been widely applied.
\subsection{Gradient Descent}
\textbf{G}radient \textbf{D}escent (\textbf{GD}) is a classical optimization algorithm for finding a local minimum of a differentiable function \cite{ruder2016overview}. In machine learning, \textbf{GD} is used to find the values of a function's parameters that minimize a cost function as far as possible. In this section, we provide some definitions that will be used in Section \ref{sec:GS}.
Given $n$ labeled examples $S=(z_1,z_2,\ldots,z_n)$, where $z_i\in\mathcal{Z}$, consider a decomposable objective function
$
f(\theta)=\frac{1}{n}\sum_{i=1}^n f(\theta;z_i),
$
where $ f(\theta;z_i)$ denotes the loss of model $\theta\in\Theta$ on the example $z_i$. The gradient update $G_{f,\alpha}$ for this problem with  learning rate $\alpha>0$ is given by 
$$
\theta_{t+1}=G_{f,\alpha}(\theta_t):=\theta_{t}-\alpha \nabla_{\theta} f(\theta_t;z_i), 
$$
where the indices $i$ are sequentially selected from $S$.
The following two concepts are useful for analyzing the dynamics of gradient descent.
\begin{definition}[$\tau $-expansive]
    An update rule $G:\Theta\rightarrow\Theta$ is $\tau $-expansive if 
    \begin{align*}
    \sup_{\theta_1,\theta _2\in \Theta}\frac{\left \|G(\theta_1)-G(\theta_2)  \right \|_2 }
    {\left \|  \theta_1-\theta_2\right \|_2 } \leq \tau. 
    \end{align*}
\end{definition}
\begin{definition}[$\sigma$-bounded]
    An update rule $G:\Theta\rightarrow\Theta$ is $\sigma $-bounded if
    \begin{align*}
    \sup_{\theta \in\Theta}\left \| \theta -G(\theta) \right \| _2\leq \sigma.
    \end{align*}
\end{definition}
\begin{remark}
    Generally, we take the norm $\|\cdot\|_{\Theta}$ for the model parameters $\theta$, where $\|\cdot\|_{\Theta}$ is the corresponding norm on $\Theta$. However, in this paper, we take the norm $\|\cdot\|_2$ without loss of generality.
\end{remark}
\section{Main Results}
\subsection{Generally-Occurring Model Change}
A popular assumption in existing literature \cite{upadhyay2021towards} is
\begin{align}
    \| \text{Params} (M) - \text{Params} (m) \| < \Delta \text{ for a constant }  \Delta . \label{eq:para}
\end{align}
As Faisal Hamman et al. \cite{pmlr-v202-hamman23a} point out, the assumption that changes in model parameters always lead to changes in classification performance is inappropriate. Instead, we should focus more on changes in the classification performance of the model rather than the changes in the parameters themselves.
Their revision of Equation (\ref{eq:para}) is similar to the concept of equivalence classes in pure mathematics: those changes in model parameters that do not change the shape of the decision boundary should be tolerated. Specifically, Faisal Hamman et al. define Naturally-Occurring Model Change as follows:
\begin{figure*}
    \centering
    \includegraphics[width=1\linewidth]{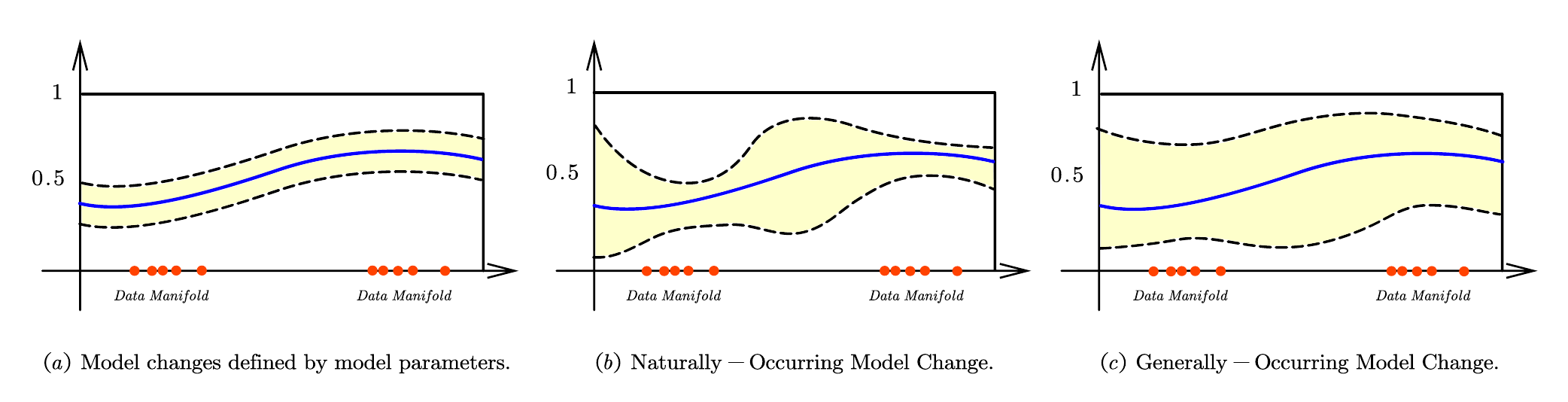}
    \caption{Comparison of Three Model Changes. (c) corresponds to the model change defined by Definition \ref{def:GOMC}. It builds on (b) by allowing the distribution of M to be not centered at $m(\cdot)$, thus having a larger range of model changes.}
    \label{fig:GOMC}
\end{figure*}
\begin{definition}[Naturally-Occurring Model Change, \cite{pmlr-v202-hamman23a}]\label{def:NOMC}
    A naturally-occurring model change is defined as follows:
    \begin{enumerate}
        \item Expectation condition. We assume that the following equation holds:
        $$\mathbb{E}[M(X)|X=x]=\mathbb{E}[M(x)]=m(x),$$ where the expectation is over the randomness of $M(\cdot)$ given a fixed value of $X=x\in \mathbb R^d$.
        \item Variance condition. We assume that the following inequality holds: $$\textit{Var}[M(x)|X=x]=Var[M(x)]=\nu_x  \leq \nu$$ which define on the fixed value of $X=x\in\mathbb{R}^d$. Futhermore, whenever 
 $x$ lies on the data manifold $\mathcal{X}$, we have $\nu_x\leq \nu$, where $\nu$ is a small constant.
        \item Lipschitz condition. Whenever $m(x)$ is $\gamma_m$-Lipschitz, any updated model $M(\cdot)$ is also $\gamma$-Lipschitz. Note that, this constant $\gamma$ does not depend on $M(\cdot)$ since we may define $\gamma$ to be an upper bound on the Lipschitz constants for all possible $M(\cdot)$ as well as $m(\cdot)$.
    \end{enumerate}
\end{definition}
It is worth noting that the first constraint in Definition \ref{def:NOMC} is a bit strict and inconvenient to verify. We generalize the concept of naturally-occurring model change and propose the following definition:
\begin{definition}[Generally-Occurring Model Change, Main Concept]\label{def:GOMC} 
A generally-occurring model change is defined as follows:
    \begin{enumerate}
        \item Expectation condition. We assume that the following inequality holds: $$\mathbb{E}_M \big[ \left \| m-M \right \|_{L^2(\mu)}   \big]\leq \delta,$$ where the expectation is over the randomness of $M(\cdot)$.
        \item Subgaussian condition. We assume that  $\left \| m-M \right \|_{L^2(\mu)}$ is a $\nu^2$-subgaussian random variable, which is equivalent to $\phi(\lambda)\leq \lambda^2\nu^2/2$, where $\phi(\lambda)$ is the log-moment generating function of $\left \| m-M \right \|_{L^2(\mu)}$.
        \item Lipschitz condition. Whenever $m(x)$ is $\gamma_m$-Lipschitz, any updated model $M(\cdot)$ is also $\gamma$-Lipschitz.This constant $\gamma$ does not depend on $M(\cdot)$ as well.
    \end{enumerate}
\end{definition}

\textbf{Comparison of Three Model Changes.} The intuitive comparison of the three model changes mentioned in this section is shown in Fig. \ref{fig:GOMC}.
The blue curve in the Fig. \ref{fig:GOMC} represents the original model $m(\cdot)$ and the yellow region represents the range in which the model can change.
Fig. \ref{fig:GOMC} (a) corresponds to the model change defined by Equation (\ref{eq:para}). This type of model change requires that the model parameters are bounded, so the model can only change around $m(\cdot)$.
Fig. \ref{fig:GOMC} (b) corresponds to the model change defined by Definition \ref{def:NOMC}. Under this model change, $M(\cdot)$ is centered at $m(\cdot)$ and changes in a low-variance manner around $m(\cdot)$. It allows for large-scale changes in parameters that do not affect the predictions on the data manifold.
Fig. \ref{fig:GOMC} (c) corresponds to the model change defined by Definition \ref{def:GOMC}. It builds on Fig. \ref{fig:GOMC} (b) by allowing the distribution of $M(\cdot)$ to be not centered at $m(\cdot)$, thus having a larger range of model changes. Therefore, Definition \ref{def:GOMC} can cover a wider range of model variations, making it a more general concept.
\subsection{Probabilistic Guarantee}
For naturally-occurring model change, Faisal Hamman et al. \cite{pmlr-v202-hamman23a} proposed the following probabilistic guarantee:
\begin{theorem}[Probability Guarantees under Naturally-Occurring Model Change, \cite{pmlr-v202-hamman23a}]\label{thm1}
    Let $X_1,X_2,\ldots,X_k$ be $k$ i.i.d. random variables with distribution $\mathcal{N}(x,\sigma^2 I_d)$ and $\psi_M^k=\frac{1}{k} \sum_{i=1}^k \left ( m(X_i)-M(X_i) \right ) $. Suppose $\left |  \mathbb{E}[\psi_M^k|M]-\mathbb{E}[\psi_M^k] \right |<\varepsilon' $. Then, for any $\varepsilon>2\varepsilon'$, a counterfactual $x\in\mathcal{X}$ under naturally-occurring model change satisfies:
    \begin{align}\label{eq:PG1}
    \operatorname{Pr} \left(M(x)\leq R_{k,\sigma ^2}(x,m)-\varepsilon \right) \leq
     \exp \left( \frac{- k \varepsilon^2}{8 (\gamma + \gamma_m)^2 \sigma^2}
    \right),
    \end{align}
    where $R_{k,\sigma^2}(x,m)=\frac{1}{k}\sum_{x_i\in N_{x,k} } \left ( m(x_i)-\gamma \cdot \left \| x-x_i\right \|_2  \right )$, $N_{x,k}$ is a set of $k$ points drawn from the Guassian distribution $\mathcal{N}(x,\sigma I_d)$ with $I_d$ being the identity matrix. The probability is over the randomness of both $M$ and $X_i$'s.
\end{theorem}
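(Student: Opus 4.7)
The plan is to establish the tail bound in three steps: first, exploit the uniform $\gamma$-Lipschitz property of any naturally-occurring $M(\cdot)$ to convert the event on $M(x)$ into a tail event on the empirical mean $\psi_M^k$; second, apply the Gaussian concentration inequality for Lipschitz functions to control $\psi_M^k$ conditionally on $M$; third, absorb the residual $\varepsilon'$-bias between $\mathbb{E}[\psi_M^k \mid M]$ and $\mathbb{E}[\psi_M^k]$ using the assumption $\varepsilon > 2\varepsilon'$.

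For the first step, the $\gamma$-Lipschitz property of $M$ gives $M(x) \geq M(X_i) - \gamma \|x - X_i\|_2$ for every $X_i \in N_{x,k}$; averaging over $i$ and subtracting the definition of $R_{k,\sigma^2}(x,m)$ produces $M(x) - R_{k,\sigma^2}(x,m) \geq -\psi_M^k$, because the $\gamma \|x - X_i\|_2$ contributions cancel identically. Hence the event $\{M(x) \leq R_{k,\sigma^2}(x,m) - \varepsilon\}$ is contained in $\{\psi_M^k \geq \varepsilon\}$, and it suffices to bound $\Pr(\psi_M^k \geq \varepsilon)$.

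For the second step, I condition on $M$. The map $g(X) := m(X) - M(X)$ is $(\gamma_m + \gamma)$-Lipschitz by the triangle inequality, and $\psi_M^k$ is the empirical mean of $k$ i.i.d.\ evaluations of $g$ at samples drawn from $\mathcal{N}(x, \sigma^2 I_d)$. The classical Tsirelson--Ibragimov--Sudakov concentration inequality makes each $g(X_i) - \mathbb{E}[g(X_i) \mid M]$ sub-Gaussian with proxy $(\gamma + \gamma_m)^2 \sigma^2$, so by independence the centered average $\psi_M^k - \mathbb{E}[\psi_M^k \mid M]$ is sub-Gaussian with proxy $(\gamma + \gamma_m)^2 \sigma^2 / k$. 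Because the NOMC expectation condition forces $\mathbb{E}[\psi_M^k] = 0$, the hypothesis collapses to $|\mathbb{E}[\psi_M^k \mid M]| < \varepsilon'$. Choosing the sub-Gaussian deviation parameter $t = \varepsilon - \varepsilon'$ and using $\varepsilon - \varepsilon' > \varepsilon / 2$ yields $\Pr(\psi_M^k \geq \varepsilon \mid M) \leq \exp\bigl(- k \varepsilon^2 / (8 (\gamma + \gamma_m)^2 \sigma^2)\bigr)$; integrating over $M$ preserves this deterministic bound and finishes the argument.

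The main obstacle will be verifying that Gaussian concentration can be invoked uniformly across the random choice of $M$, which is precisely where the uniform $\gamma$-Lipschitz guarantee built into Definition \ref{def:NOMC} is essential: it keeps the sub-Gaussian proxy independent of $M$. Once this is secured, the denominator $8$ emerges as the product of the standard factor $2$ in the sub-Gaussian tail with the factor $4$ gained from halving $\varepsilon$ through the $\varepsilon > 2 \varepsilon'$ margin, and the remaining manipulations are routine algebra.
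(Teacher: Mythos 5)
The paper does not actually prove Theorem \ref{thm1}; it is imported verbatim from \cite{pmlr-v202-hamman23a}, so there is no in-paper proof to compare against. Your argument is correct and is essentially the standard one from that source: the set-inclusion step via the $\gamma$-Lipschitz bound, Tsirelson--Ibragimov--Sudakov concentration conditionally on $M$ with the $M$-independent proxy $(\gamma+\gamma_m)^2\sigma^2/k$, the observation that the expectation condition gives $\mathbb{E}[\psi_M^k]=0$ so the bias hypothesis reduces to $|\mathbb{E}[\psi_M^k\mid M]|<\varepsilon'$, and the $\varepsilon-\varepsilon'>\varepsilon/2$ slack producing the factor $8$ all check out, and the whole argument runs parallel to the paper's own proof of Theorem \ref{thm2}, with Gaussian concentration playing the role that McDiarmid's inequality plays there.
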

In the Equation (\ref{eq:PG1}), $R_{k,\sigma^2}(x,m)$ is the stability measure proposed by Faisal Hamman et al.. They point out that in the concrete computation, because $\gamma$ is unknown, only the relaxed version of $R_{k,\sigma^2}$ can be computed: 
$$
\hat{R}_{k,\sigma^2}=\frac{1}{k}\sum_{x_i\in N_{x,k} } \left ( m(x_i)-\left |m(x)-m(x_i)  \right |   \right ) .
$$
Meanwhile, they proposed the Counterfactual Robustness Test: $\hat{R}_{k,\sigma^2}\geq \tau.$
According to Theorem \ref{thm1}, setting a larger $\tau$ will  guarantee that the new model $M(\cdot)$ has a higher validity with high probability.
To better understand Theorem \ref{thm1}, Equation (\ref{eq:PG1}) can be written as follows:
\begin{align}
\Pr \left(\frac{1}{k} \sum_{i = 1}^k m (X_i) - M (x) \geq \frac{\gamma}{k} \sum_{i
= 1}^k \| x - X_i \|_2 +\varepsilon \right) 
\leq
 \exp \left( \frac{- k \varepsilon^2}{8 (\gamma + \gamma_m)^2 \sigma^2}\label{eq:PG1E}
\right).
\end{align}
Equation (\ref{eq:PG1E}) provides an intuitive understanding of Theorem \ref{thm1}: the prediction of the new model $M(\cdot)$ for $x$ is approximately equal to the average prediction of the original model $m(\cdot)$ for $x$ in its neighborhood.

For the more general concept of generally-occurring model change, we provide the following probabilistic guarantee:
\begin{theorem}[Probability Guarantees under Gernerally-Occurring Model Change, Main Theorem]\label{thm2}
    Let $X_1,X_2,\ldots,X_k$ be $k$ i.i.d. random variables with distribution $\tilde \mu\ll \mu, \kappa(\tilde \mu,\mu):=\| d \tilde{\mu} / d \mu \|_{L^2 (\mu)}$.  Then, for any $\varepsilon,\ell>0$, a counterfactual $x\in\mathcal{X}$ under gernerally-occurring model change satisfies:
   \begin{align}
     \Pr& \bigg( \frac{1}{k} \sum_{i = 1}^k  m (X_i) - M (x)   \geq \frac{\gamma}{k}\sum_{i
      = 1}^k \| x - X_i \|_2 + \varepsilon \notag\\
    +&(\delta +\ell\cdot \nu) \cdot
      \kappa (\tilde{\mu},\mu)\bigg)
    \leq 2 \exp \left(- \frac{\varepsilon^2 k}{2}  \right)+\exp\left (  -\frac{\ell^2}{2} \right ).\label{eq:PG2}
    \end{align}
    The probability is over the randomness of both $M$ and $X_i$'s. 
\end{theorem}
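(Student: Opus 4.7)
The plan is to peel off the $M(x)$ term using the Lipschitz condition on $M(\cdot)$ from Definition \ref{def:GOMC}, then control the resulting empirical average of $g(X_i) := m(X_i) - M(X_i)$ by combining three ingredients: (i) a change-of-measure step that converts $\tilde{\mu}$-expectations into $\mu$-norms via Cauchy-Schwarz, (ii) Hoeffding's inequality applied conditionally on the random model $M$, and (iii) the subgaussian tail for $\|m-M\|_{L^2(\mu)}$ supplied by Definition \ref{def:GOMC}. This is analogous in spirit to the proof of Theorem \ref{thm1}, but the randomness of the sampling distribution $\tilde{\mu}$ replaces the explicit Gaussian kernel and the strict mean condition is replaced by an $L^2(\mu)$-type control.

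First I would use the $\gamma$-Lipschitz property of $M$ to write $M(x) \geq M(X_i) - \gamma \|x - X_i\|_2$ for each $i$, average over $i$, and rearrange to obtain
\begin{align*}
    \frac{1}{k}\sum_{i=1}^k m(X_i) - M(x) - \frac{\gamma}{k}\sum_{i=1}^k \|x - X_i\|_2 \leq \frac{1}{k}\sum_{i=1}^k g(X_i).
\end{align*}
Thus the event in Equation (\ref{eq:PG2}) is contained in $\bigl\{\tfrac{1}{k}\sum_i g(X_i) \geq \varepsilon + (\delta + \ell\nu)\kappa(\tilde{\mu},\mu)\bigr\}$, and it suffices to bound the probability of this smaller event.

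Next, set $Y_M := \|m - M\|_{L^2(\mu)}$ and condition on $M$. Because $\tilde{\mu} \ll \mu$, the Cauchy-Schwarz inequality gives
\begin{align*}
    \mathbb{E}[g(X_i) \mid M] = \int g\, d\tilde{\mu} = \int g\, \frac{d\tilde{\mu}}{d\mu}\, d\mu \leq Y_M \cdot \kappa(\tilde{\mu},\mu),
\end{align*}
so on the event $\{Y_M \leq \delta + \ell\nu\}$ the conditional mean is at most $(\delta + \ell\nu)\kappa(\tilde{\mu},\mu)$. I would then split the probability into this event and its complement. On $\{Y_M \leq \delta + \ell\nu\}$ the target event forces $\tfrac{1}{k}\sum_i g(X_i) - \mathbb{E}[g(X_i)\mid M] \geq \varepsilon$, and since conditionally on $M$ the variables $g(X_i) \in [-1,1]$ are i.i.d., Hoeffding's inequality produces a term of order $\exp(-\varepsilon^2 k/2)$ (with the prefactor $2$ absorbing a two-sided application or loose numerical constants). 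On the complement $\{Y_M > \delta + \ell\nu\}$, the subgaussian assumption combined with $\mathbb{E}Y_M \leq \delta$ yields $\Pr[Y_M > \delta + \ell\nu] \leq \Pr[Y_M - \mathbb{E}Y_M > \ell\nu] \leq \exp(-\ell^2/2)$, giving the remaining term.

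The main obstacle I anticipate is the change-of-measure step: one must verify that pairing Cauchy-Schwarz against $d\tilde{\mu}/d\mu$ is the correct way to import the $L^2(\mu)$-flavoured generally-occurring model change hypothesis, and that the truncation threshold $\delta + \ell\nu$ is chosen so that the conditional Hoeffding bound and the subgaussian tail combine cleanly into the exact form of Equation (\ref{eq:PG2}). A secondary subtlety is the proper interpretation of the log-moment generating function $\phi(\lambda)$ as centred at $\mathbb{E}Y_M$; since the expectation condition only states $\mathbb{E}Y_M \leq \delta$, one obtains the stated bound only after replacing $\mathbb{E}Y_M$ by $\delta$, which is valid because the tail inequality only loosens when the centre is shifted leftward.
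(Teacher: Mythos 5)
Your proposal is correct and follows essentially the same route as the paper: reduce via the $\gamma$-Lipschitz property of $M$ to the empirical average $\psi_M^k$, bound its conditional mean by $\|m-M\|_{L^2(\mu)}\cdot\kappa(\tilde\mu,\mu)$ via Cauchy--Schwarz (the paper invokes H\"older), apply a conditional concentration bound (the paper uses McDiarmid, which here coincides with your Hoeffding step), and split on the event $\{\|m-M\|_{L^2(\mu)}\leq\delta+\ell\nu\}$ controlled by the subgaussian/Chernoff tail. The two subtleties you flag --- the change-of-measure pairing and the re-centering of the subgaussian bound at $\delta$ rather than $\mathbb{E}\|m-M\|_{L^2(\mu)}$ --- are resolved exactly as in the paper's argument.
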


\textbf{Theorems Comparison.} A straightforward comparison of Equations (\ref{eq:PG1E}) and Equation (\ref{eq:PG2}) shows that Theorem \ref{thm2} is an extension of Theorem \ref{thm1}. Specifically, this extension is twofold:
\begin{enumerate}[i.]
    \item  First, Theorem \ref{thm2} does not require that the sampling distribution used near $x$ is Gaussian. The extension is particularly useful when the true distribution of the data near $x$ is significantly different from the Gaussian distribution $\mathcal{N}(x,\sigma I_d)$. 
    \item Second, in Theorem \ref{thm2}, we no longer need the technical assumption $$\left |  \mathbb{E}[\psi_M^k|M]-\mathbb{E}[\psi_M^k] \right |<\varepsilon' $$ which is not natural in Theorem \ref{thm1}.
\end{enumerate}
\subsection{Proof of Theorem \ref{thm2}}\label{pfmr}
Before proving Theorem \ref{thm2}, we present the following lemma:
\begin{lemma}[Deviation Bound]\label{lem:DB}
    Let $S=$ $\{X_1,X_2,\ldots,X_k\}$ consist of $k$ i.i.d. random variables belonging to $\tilde \mu\ll \mu$ and 
    $ \kappa(\tilde \mu,\mu):=\| d \tilde{\mu} / d \mu \|_{L^2 (\mu)},\psi_M^k := \frac{1}{k} \sum_{i  = 1}^k 
    (m (X_i) - M (X_i)) $. Then under gernerally-occurring model change, we have 
   $
         \operatorname{Pr} \left ( \psi_M^k\geq \| m - M \|_{L^2 (\mu)} \cdot \kappa (
                  \tilde{\mu}, \mu)+ \varepsilon|M=\tilde m \right ) 
        \leq 2 \exp (- \varepsilon^2 k/2 ),
   $ 
    where $\tilde{m}$ is a model that may be obtained after the change.
\end{lemma}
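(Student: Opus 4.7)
The plan is to condition on the realized model $M = \tilde m$ and reduce the statement to a standard concentration inequality for i.i.d.\ bounded random variables; the role of $\kappa(\tilde\mu,\mu)$ will enter only through bounding the conditional expectation via a change-of-measure plus Cauchy--Schwarz step. Notice that the expectation and subgaussian conditions of generally-occurring model change are not needed at this stage: the lemma is a purely conditional statement and the only structural hypothesis used is that $m$ and $\tilde m$ take values in $[0,1]$ (so the summands are bounded) together with $\tilde\mu \ll \mu$.

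First I would fix $M = \tilde m$ and observe that the summands $Y_i := m(X_i) - \tilde m(X_i)$ are i.i.d.\ under $X_i \sim \tilde\mu$, each lying in $[-1,1]$. Next I would compute the conditional mean
\begin{align*}
\mathbb{E}[Y_i \mid M = \tilde m] = \int_{\mathcal{X}} (m - \tilde m)\, d\tilde\mu = \int_{\mathcal{X}} (m - \tilde m)\, \frac{d\tilde\mu}{d\mu}\, d\mu,
\end{align*}
using absolute continuity to insert the Radon--Nikodym derivative. Cauchy--Schwarz in $L^2(\mu)$ then gives
\begin{align*}
\bigl|\mathbb{E}[Y_i \mid M = \tilde m]\bigr| \leq \|m - \tilde m\|_{L^2(\mu)} \cdot \left\|\frac{d\tilde\mu}{d\mu}\right\|_{L^2(\mu)} = \|m - \tilde m\|_{L^2(\mu)} \cdot \kappa(\tilde\mu,\mu),
\end{align*}
which is exactly the offset appearing in the claim.

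Having controlled the conditional mean, I would invoke Hoeffding's inequality for the bounded i.i.d.\ summands $Y_i$ conditional on $M = \tilde m$ to obtain $\Pr(\psi_M^k - \mathbb{E}[\psi_M^k \mid M = \tilde m] \geq \varepsilon \mid M = \tilde m) \leq \exp(-k\varepsilon^2/2)$, with the prefactor $2$ in the stated bound accounted for by using the two-sided version of Hoeffding. The conclusion then follows from the inclusion $\{\psi_M^k \geq \|m - \tilde m\|_{L^2(\mu)} \kappa(\tilde\mu,\mu) + \varepsilon\} \subseteq \{\psi_M^k - \mathbb{E}[\psi_M^k \mid M = \tilde m] \geq \varepsilon\}$, which is valid precisely because $\mathbb{E}[\psi_M^k \mid M = \tilde m] \leq \|m - \tilde m\|_{L^2(\mu)} \kappa(\tilde\mu,\mu)$ by the previous step.

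The main, and essentially only, non-routine step is the change-of-measure Cauchy--Schwarz computation: it is what simultaneously introduces the $L^2(\mu)$ model discrepancy and the density factor $\kappa(\tilde\mu,\mu)$, and it is what allows the lemma to extend the Gaussian, fixed-variance sampling of Theorem \ref{thm1} to arbitrary absolutely continuous sampling distributions $\tilde\mu$. Everything else is either bookkeeping (boundedness of $Y_i$) or a direct appeal to Hoeffding.
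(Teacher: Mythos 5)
Your proposal is correct and follows essentially the same route as the paper's proof: the paper bounds the conditional mean by inserting the Radon--Nikodym derivative and applying H\"older's inequality (your Cauchy--Schwarz step is the $p=q=2$ case of this), and then applies McDiarmid's inequality to the bounded-difference average, which for i.i.d.\ summands in $[-1,1]$ is exactly your Hoeffding step with the same constant. Your additional observation that the expectation and subgaussian conditions are not needed for this conditional lemma is accurate; they enter only later in the proof of Theorem~\ref{thm2}.
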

\begin{proof}
    It is noted that for any $i = 1, 2,\ldots, k$, we have
    \begin{align*}
      & | \psi_M^k (X_1, X_2, \ldots, X_i, \ldots, X_K) 
    - \psi_M^k (X_1, X_2, \ldots,
      X_i', \ldots, X_K) | \\
      = & \left | \frac{1}{k} ( m (X_i) - M (X_i) ) - \frac{1}{k}  (m (X_i') - M (X_i') ) \right | 
      \leq \frac{2}{k} .
    \end{align*}
    Hence, by McDiarmid's Inequality \cite{mcdiarmid1989method}, we have
    \begin{align}
    \Pr (\psi_M^k - \mathbb{E}_{S} [\psi_M^k |M = \tilde{m}] \geq \varepsilon |M =
      \tilde{m})
      \leq   2  \exp \left(- \frac{\varepsilon^2 k}{2}  \right).\label{eq:DB1}
    \end{align}
    Furthermore, by using Triangle inequality and Hölder's inequality, we have
     \begin{align*}
    \mathbb E_S\left [ \psi_M^k\mid M=\tilde{m} \right ] 
    =&\mathbb E_S\left [ \frac{1}{k} \sum_{i  = 1}^k  (  m (X_i) - \tilde{m} (X_i)  )  \right ]
    \\ =&\frac 1k \sum_{i  = 1}^k\mathbb E_{X_i}\big [ \left (  m (X_i) - \tilde{m} (X_i)  \right )  \big ]
    (\text{ i.i.d. })\\   
    =&\int_{\mathcal{X} }  (m (X) - \tilde{m} (X) ) d \tilde{\mu} (X)
\\
       \leq&  \int_{\mathcal{X} } | m (X) - \tilde{m} (X) | d \tilde{\mu} (X)\text{ ( Triangle inequality ) }\\
       =&  \int_{\mathcal{X} } | m (X) - \tilde{m} (X) | \cdot \frac{d \tilde{\mu}}{d \mu} \cdot d
      \mu (X)\\
       \leq& \| m - \tilde{m} \|_{L^2 (\mu)} \cdot \kappa (
      \tilde{\mu}, \mu)\text{ ( Hölder's inequality ) } .
    \end{align*}
    Therefore, we have
    \begin{align}
      \left \{ \psi_M^k-\| m - \tilde{m} \|_{L^2 (\mu)} \cdot \kappa (
          \tilde{\mu},\mu)\geq \varepsilon \right \} 
    \subseteq  \left \{ \psi_M^k-\mathbb{E}_{\tilde \mu}[\psi_M^k|M=\tilde m]\geq \varepsilon   \right \}.\label{eq:DB2}
    \end{align}
    By combining Equation (\ref{eq:DB1}) and Equation (\ref{eq:DB2}), we obtain 
    \begin{align*}
     \operatorname{Pr} \left ( \psi_M^k-\| m - M \|_{L^2 (\mu)} \cdot \kappa (\tilde{\mu}, \mu)\geq \varepsilon|M=\tilde m \right ) 
    \leq 2  \exp \left(- \frac{\varepsilon^2 k}{2}  \right).
    \end{align*}
    This completes the proof. $\hfill \square$
\end{proof}
Subsequently, we complete the proof of Theorem \ref{thm2}.
\begin{proof}[proof of Theorem \ref{thm2}]
    The expectation condition implies that:
\begin{align*}
\left \{ \| m - M \|_{L^2 (\mu)} -\delta> \ell\cdot \nu \right \} 
\subseteq \left \{ \| m - M \|_{L^2 (\mu)}-\mathbb{E}_M\left [  \| m - M \|_{L^2 (\mu)}\right ]   >\ell\cdot \nu \right \} 
\end{align*}
Then, using Chernoff bound \cite{hellman1970probability}, we have
\begin{align}
&\Pr (\| m - M \|_{L^2 (\mu)} > \delta +\ell\cdot \nu )\notag
\\
 \leq& \Pr \bigg(\| m - M \|_{L^2 (\mu)}-\mathbb{E}_M\left [  \| m - M \|_{L^2 (\mu)}\right ]   >\ell\cdot \nu \bigg)\notag  \\
  \leq & \exp\left ( -\left (  \ell\cdot \nu   \right ) ^2/2\nu^2 \right )=\exp\left (  -\ell^2/2\right ).   \label{eq:thm2MI}
\end{align}

 And, notice that
\begin{align}
& \Pr (\psi_M \geq \varepsilon + (\delta +\ell\cdot \nu) \cdot
  \kappa (\tilde{\mu}, \mu))\notag\\
  = & \mathbb{E}_M \left [ \Pr (\psi_{\tilde{m}} \geq \varepsilon
  {+ (\delta +\ell\cdot \nu) \cdot \kappa
  ( \tilde{\mu}, \mu) |M = \tilde{m}}) \right ] \notag\\
  = & \mathbb{E}_M \big [\Pr (\psi_{\tilde{m}} \geq \varepsilon
  + (\delta +\ell\cdot \nu) \cdot \kappa
  ( \tilde{\mu}, \mu) |M = \tilde{m})\notag 
\cdot \mathbb I \left (  \| m - M \|_{L^2 (\mu)}
  \leq \delta +\ell\cdot \nu\right )\big]\notag \\
  + & \mathbb{E}_M \big [\Pr (\psi_{\tilde{m}} \geq \varepsilon
  + (\delta +\ell\cdot \nu) \cdot \kappa
  ( \tilde{\mu}, \mu) |M = \tilde{m}) 
\cdot \mathbb I \left (  \| m - M \|_{L^2 (\mu)}
  > \delta +\ell\cdot \nu\right) \big] \notag\\
\leq & \mathbb{E}_M \big [\Pr (\psi_{\tilde{m}} \geq \varepsilon
  +  \| m - M \|_{L^2 (\mu)} \cdot \kappa
  ( \tilde{\mu}, \mu) |M = \tilde{m})\cdot \mathbb I \left (  \| m - M \|_{L^2 (\mu)}
  \leq \delta +\ell\cdot \nu\right )\big]\notag\\
  + & \operatorname{Pr}  \left (  \| m - M \|_{L^2 (\mu)}
  > \delta +\ell\cdot \nu\right ),\notag
\end{align}
which, combined with  Lemma \ref{lem:DB} and Equation (\ref{eq:thm2MI}), gives
\begin{align}
& \Pr (\psi_M \geq \varepsilon + (\delta +\ell\cdot \nu) \cdot
  \kappa (\tilde{\mu}, \mu))\notag\\
 \leq & 2 \exp \left(- \frac{\varepsilon^2 k}{2}  \right)\cdot\Pr (\| m - M \|_{L^2 (\mu)} \leq (\delta +\ell\cdot \nu))
\notag\\
+&\operatorname{Pr}  \left (  \| m - M \|_{L^2 (\mu)}
  > \delta +\ell\cdot \nu\right )\notag\text{( Lemma \ref{lem:DB} )}\\
\leq& 2  \exp \left(- \frac{\varepsilon^2 k}{2}  \right)+\exp\left (  -\frac{\ell^2}{2} \right ).\text{ ( Equation (\ref{eq:thm2MI}) )}\label{eq:thm2C}
\end{align} 
Now, note that by the Lipschitz condition, we have
\begin{align*}
  & \frac{1}{k} \sum_{i = 1}^k   m (X_i) - M (x)   - \frac{\gamma}{k} \sum_{i
  = 1}^k \| x - X_i \|_2 \\
  \leq & \frac{1}{k} \sum_{i = 1}^k   m (X_i) - M (x)  - \frac{1}{k}
  \sum_{i = 1}^k  \left ( M (X_i) - M (x) \right )  \\
  = & \frac{1}{k} \sum_{i = 1}^k \left (  m (X_i) - M (X_i)\right ) =\psi_M^k.
\end{align*}
Therefore, 
\begin{align}
&\left \{ \psi_M \geq \varepsilon + (\delta +\ell\cdot \nu) \cdot
  \kappa (\tilde{\mu}, \mu) \right \} \notag\\
\subseteq  &\bigg \{ \frac{1}{k} \sum_{i = 1}^k   m (X_i) - M (x)   \geq \frac{\gamma}{k}\sum_{i
  = 1}^k \| x - X_i \|_2 
+(\delta +\ell\cdot \nu) \cdot
  \kappa (\tilde{\mu}, \mu)+ \varepsilon
\bigg \}.\label{eq:thm2set}
\end{align}
Finally, combining Equation (\ref{eq:thm2C}) and Equation (\ref{eq:thm2set}) yields
\begin{align*}
 \Pr& \bigg( \frac{1}{k} \sum_{i = 1}^k  m (X_i) - M (x)   \geq \frac{\gamma}{k}\sum_{i
  = 1}^k \| x - X_i \|_2 + \varepsilon \\
+&(\delta +\ell\cdot \nu) \cdot
  \kappa (\tilde{\mu}, \mu)\bigg)
\leq 2 \exp \left(- \frac{\varepsilon^2 k}{2}  \right)+\exp\left (  -\frac{\ell^2}{2} \right ).
\end{align*}
This completes the proof.$\hfill \square$
\end{proof}

\section{Case Study: Dataset Perturbation Problem}\label{sec:GS}
\subsection{Problem Setting and Assumptions}
In this section, we consider a specific problem. Suppose we have an original dataset $S_1$ of size $n$, and a new dataset $S_2$ obtained by perturbation, as shown in Fig. \ref{fig:DSP}. The two datasets have most of their elements in common, with only a few $r$ elements being different. Thus, we can write $S_1,S_2$ as 
\begin{align*}
S_1=\{ z_1, \ldots, z_{n - r} \}\bigcup  \{ z_{n - r + 1} \ldots z_n\},
S_2 = \{ z_1, \ldots, z_{n - r} \}\bigcup \{ s_{n - r + 1} \ldots s_n\}.
\end{align*}
\begin{figure}
    \centering
    \includegraphics[width=0.6\linewidth]{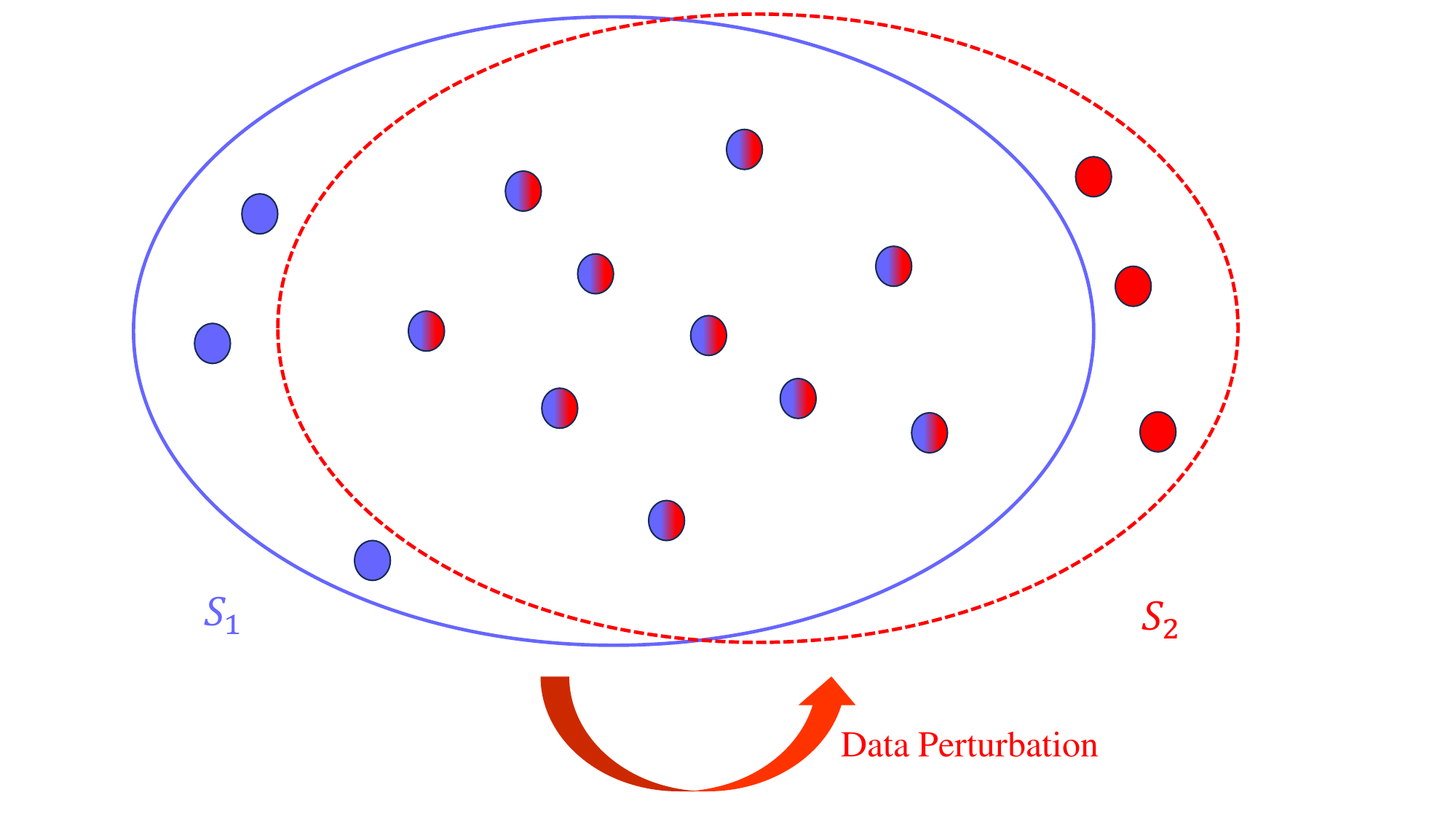}
    \caption{Data Perturbation: A small portion of the original dataset is modified to create a new dataset. The purple circles in the figure represent the data points from the original dataset, the red circles represent the data points from the new dataset, and the bicolored circles represent the data points that are shared between the two datasets.}
    \label{fig:DSP}
\end{figure}
Now, we assume that we start from the same initialization parameters, first train the original model $m(\cdot)$ using $ S_1$ through \textbf{GD}, and then train the new model $M(\cdot)$ using $S_2$ through \textbf{GD}.
We consider the following question: when model $m(\cdot)$ is updated to $M(\cdot)$, how can we theoretically characterize the robustness of counterfactuals under this model change?
In fact, we will see in Section \ref{secTR} that the model change from $m(\cdot)$ to $M(\cdot)$ cannot be considered as a naturally-occurring model change, but it can still be considered as a generally-occurring model change as defined in this paper. Furthermore, we can use Theorem \ref{thm2} to provide probability guarantees.

However, before we proceed with the theoretical argument, we need introduce some reasonable assumptions. More precisely, we assume the following:
\paragraph{ Assumption 1.}
    In the rest of this paper, we always assume that the model under consideration satisfies the Lipschitz condition.
\paragraph{ Assumption 2.}
    In the present paper, we consider a loss function $f$ that satisfies the following conditions for all $z\in \mathcal{Z},\theta_1,\theta_2\in \Theta$:
    \begin{enumerate}
	 \item $f(\cdot,z)$ is $L$-Lipschitz :
        $
         |f(\theta_1,z)-f(\theta_2,z)|\leq L \|\theta_1-\theta_2\|_2
        $.
        \item $f(\cdot,z)$ is $\alpha$-smooth :
        $\left \| \nabla f(\theta_1)-\nabla f(\theta_2)  \right \|_2
        \leq \alpha \left \| \theta_1-\theta_2 \right \|_2  $.
        \item  $f(\cdot,z)$ is left $\xi$-admissible :
         $|f(\theta_1,z)-f(\theta_2,z)|
        \geq \xi \left | c_{\theta_1}(z)-c_{\theta_2}(z) \right |$, 
         where $c_{\theta_1}(\cdot)$ and $c_{\theta_2}(\cdot)$ are the corresponding classification models for $\theta_1$ and $\theta_2$.
    \end{enumerate}
To demonstrate the validity of Assumption 2, we consider an example of a specific loss function.
\begin{proposition}
Let instance space $\mathcal{X}=\{x\in\mathbb R^d:\|x\|_2\leq B\}$, hypothesis space $\Theta=\{\theta\in\mathbb R^d:\|\theta\|_2\leq 1\}$, and 
 label space $\mathcal{Y}=\{-1,+1\}$. Then the logistic loss function 
 $f (\theta, z) = \ln (1 + \exp\left ( - y (x^{\top } \cdot\theta) \right ) )$
 satisfies Assumption 2, for
 $
 L=B/(\exp(-B)+1),\alpha=B^2/4,\xi=1/\exp(B)+1
 $.
\end{proposition}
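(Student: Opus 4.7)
The plan is to verify the three conditions of Assumption~2 one at a time by direct calculus on the logistic loss $f(\theta,z)=\ln(1+e^{-yx^\top\theta})$, exploiting throughout the key consequence of the hypotheses: $\|x\|_2\le B$ and $\|\theta\|_2\le 1$ together pin the inner product into $yx^\top\theta\in[-B,B]$, so every occurrence of $e^{\pm yx^\top\theta}$ lives in $[e^{-B},e^{B}]$.

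For $L$-Lipschitzness, I would compute $\nabla_\theta f(\theta,z)=-yx/(1+e^{yx^\top\theta})$ and bound its Euclidean norm by maximising over the denominator: using $yx^\top\theta\ge -B$ gives $\|\nabla_\theta f\|_2\le B/(1+e^{-B})=L$. For $\alpha$-smoothness, a second differentiation yields $\nabla^2_\theta f(\theta,z)=\sigma(yx^\top\theta)\bigl(1-\sigma(yx^\top\theta)\bigr)\,xx^\top$ with $\sigma(u)=1/(1+e^{-u})$; the scalar factor is at most $1/4$ and the rank-one matrix has operator norm $\|x\|_2^2\le B^2$, so the Hessian is bounded in operator norm by $B^2/4=\alpha$. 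Integrating this bound along the segment between $\theta_1$ and $\theta_2$ then converts it into the claimed gradient-Lipschitz inequality.

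For the left $\xi$-admissibility, which is the most delicate condition, I would first identify the appropriate classification model by taking $c_\theta(x)=x^\top\theta$, the natural score whose sign recovers the label in $\{-1,+1\}$, so that $|c_{\theta_1}(z)-c_{\theta_2}(z)|=|x^\top(\theta_1-\theta_2)|$. Applying the mean value theorem along the segment from $\theta_1$ to $\theta_2$ produces some intermediate $\bar\theta$ with $f(\theta_1,z)-f(\theta_2,z)=-yx^\top(\theta_1-\theta_2)/(1+e^{yx^\top\bar\theta})$; since $yx^\top\bar\theta\le B$, the scalar prefactor is at least $1/(1+e^B)$ in absolute value, and taking absolute values closes the desired lower bound with $\xi=1/(e^B+1)$.

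The main obstacle is precisely this $\xi$-admissibility step: it is the only one requiring a \emph{lower} rather than upper bound on a derivative, and it is sensitive to the choice of classification model. The constant $\xi=1/(e^B+1)$ appears to arise uniquely under the logit interpretation $c_\theta(x)=x^\top\theta$; using instead the logistic probability $\sigma(x^\top\theta)$ would absorb an additional factor coming from $\sigma'\le 1/4$ and give a different constant. I would therefore state the interpretation of $c_\theta$ explicitly before invoking the mean value theorem.
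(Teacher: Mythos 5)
Your proposal is correct and follows essentially the same route as the paper's proof: both reduce everything to the fact that $yx^\top\theta\in[-B,B]$, bound the first derivative from above by $1/(e^{-B}+1)$ and from below by $1/(e^{B}+1)$ and the second derivative by $1/4$, and then combine the mean value theorem with $\|x\|_2\le B$ (Cauchy--Schwarz) to get $L$, $\alpha$, and $\xi$; your choice $c_\theta(z)=x^\top\theta$ for the admissibility condition is exactly the interpretation the paper's proof uses implicitly. The only cosmetic difference is that you work with the gradient and Hessian in $\theta$ directly while the paper first reduces to the scalar function $l(t)$, which yields identical constants.
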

\begin{proof}
    Let $yx^\top\cdot \theta$ be denoted by $t\in [-B,B]$. Then, 
    $
    f(\theta,z)=l(t)=\ln(1+e^t),
    $
     Note that
     $
      | l' (t) |  =  \frac{1}{{e^t}  + 1} \in \left[ \frac{1}{e^B + 1},
  \frac{1}{e^{- B} + 1} \right] 
     $.
     By the mean value theorem and Cauchy-Schwarz inequality, there is some $\tau\in [-B,B]$ such that
     \begin{align*}
  | l (t_1) - l (t_2) |  & =  | l' (\tau) | \cdot | t_1 - t_2 |
    \leq  \frac{1}{e^{- B} + 1} \| x \|_2 \cdot\|\ \theta_1 - \theta_2 \|_2
   \leq  \frac{B}{e^{- B} + 1} \| \theta_1 - \theta_2 \|_2 .
 \end{align*}
Consequently, $f(\cdot,z) $ is $L$-Lipschitz.
Similarly, for any $\theta_1,\theta_2\in\Theta$, we let $t_1=x^\top\cdot \theta_1,t_2=x^\top\cdot \theta_2$. There is some $\tau\in [-B,B]$ such that
\begin{align*}
  | l (t_1) - l (t_2) | & =  | l' (\tau) | \cdot | t_1 - t_2 |
   \geq  \frac{1}{e^B + 1} \left | x^{\top} \cdot \theta_1 - x^{\top} \cdot
  \theta_2 \right | .
\end{align*}
Hence, $f(\cdot,z) $ is also left $\xi$-admissible. 
Furthermore, since
\begin{align*}
  \left | l'' (t) \right |  & =  \frac{e^{- x}}{(1 + e^{- x})^2}
   =  \frac{1}{(1 - e^{- x}) (1 + e^x)} \leqslant \frac{1}{4} .
\end{align*}
We can obtain the following by utilizing the same techniques:
\begin{align*}
 \| \nabla  f (\theta_1, z) - \nabla  f (\theta_2, z) \|_2
  \leq \frac{B}{4} \| x \|_2 \cdot \| \theta_1 - \theta_2 \|_2 \leq
  \frac{B^2}{4} \| \theta_1 - \theta_2 \|_2 .
\end{align*}
Thus, $f(\cdot,z) $ is also $\alpha$-smooth. $\hfill\square$
\end{proof}
\subsection{Theoretical Result}\label{secTR}
We consider the setting of convex optimization and prove the following theorem:
\begin{theorem}[Convex Optimization]\label{thm3}
    Suppose that the loss function $f(\cdot,z)$ is convex for all $z\in\mathcal{Z}$ and satisfies Assumption 2, and that $m(\cdot)$ and $M(\cdot)$ satisfy Assumption 1. Suppose that we run \textbf{GD} with step sizes $\eta_t\leq 2/\beta $ for $ n$ steps. Then,
    \begin{align*}
     \Pr& \Bigg( \frac{1}{k} \sum_{i = 1}^k  m (X_i) - M (x)   \geq \frac{\gamma}{k}\sum_{i
      = 1}^k \| x - X_i \|_2 + \varepsilon \notag\\
    +& \left ( \frac{2 L^2}{\xi} \sum_{t = n
    - r + 1}^{n} \eta_t \right )  \cdot
      \kappa (\tilde{\mu}, \mu)\Bigg)
    \leq 2  \exp \left(- \frac{\varepsilon^2 k}{2}  \right).
    \end{align*}
\end{theorem}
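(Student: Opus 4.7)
The plan is to recognize Theorem \ref{thm3} as an instance of Theorem \ref{thm2} in which the randomness of $M(\cdot)$ degenerates. Once $S_1, S_2$ and the initialization are fixed, the GD-trained models $m(\cdot)$ and $M(\cdot)$ are deterministic, so if I produce a deterministic upper bound of the form $\|m - M\|_{L^2(\mu)} \leq (2L^2/\xi) \sum_{t = n - r + 1}^n \eta_t$, then the expectation condition of Definition \ref{def:GOMC} is satisfied with this $\delta$, the subgaussian condition holds trivially with $\nu = 0$, and the $\exp(-\ell^2/2)$ contribution that appears in the proof of Theorem \ref{thm2} collapses to zero, which is exactly why the bound in Theorem \ref{thm3} loses that term.

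The technical heart of the proof is a coupling argument tracking $\Delta_t := \|\theta_m^{(t)} - \theta_M^{(t)}\|_2$ along the two GD trajectories. Under convexity of $f(\cdot, z)$, $\alpha$-smoothness, and step sizes $\eta_t \leq 2/\alpha$, each single-sample update $G_{f(\cdot; z), \eta_t}$ is $1$-expansive, the classical non-expansiveness of gradient descent on convex smooth objectives. For $t \leq n - r$ both trajectories apply identical maps to the same point, so $\Delta_t = 0$. For $t > n - r$ the two updates only differ in the data point used; writing
\begin{align*}
\theta_m^{(t+1)} - \theta_M^{(t+1)} = \bigl(G_{f(\cdot; z_t), \eta_t}(\theta_m^{(t)}) - G_{f(\cdot; z_t), \eta_t}(\theta_M^{(t)})\bigr) + \eta_t \bigl(\nabla f(\theta_M^{(t)}; z_t) - \nabla f(\theta_M^{(t)}; s_t)\bigr),
\end{align*}
applying $1$-expansiveness to the first bracket and $\|\nabla f\|_2 \leq L$ (from $L$-Lipschitzness) to the second gives $\Delta_{t+1} \leq \Delta_t + 2L\eta_t$, which telescopes to $\Delta_n \leq 2L \sum_{t = n - r + 1}^n \eta_t$.

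Next I would convert the parameter gap into an $L^2(\mu)$ bound on $m - M$. The left $\xi$-admissibility combined with $L$-Lipschitzness of $f(\cdot, z)$ in $\theta$ gives, for every $z$,
\begin{align*}
\xi \cdot | c_{\theta_m}(z) - c_{\theta_M}(z) | \leq | f(\theta_m, z) - f(\theta_M, z) | \leq L \cdot \Delta_n,
\end{align*}
so $|m(x) - M(x)| \leq (2L^2/\xi) \sum_{t = n - r + 1}^n \eta_t$ holds pointwise, and the same bound passes to $\|m - M\|_{L^2(\mu)}$ by integrating against $\mu$. Plugging this deterministic $\delta$ into the proof of Theorem \ref{thm2} renders the event $\{\|m - M\|_{L^2(\mu)} > \delta\}$ null, so only the $2 \exp(-\varepsilon^2 k / 2)$ tail from Lemma \ref{lem:DB} survives.

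I expect the main obstacle to be the stability step: one must carefully invoke the $1$-expansiveness lemma for convex smooth losses with the sequential-example variant of GD used here, reconcile the $\eta_t \leq 2/\beta$ appearing in the statement with the $\alpha$-smoothness of Assumption 2, and keep the index bookkeeping honest so that exactly the last $r$ indices contribute to the sum. Once the deterministic bound on $\|m - M\|_{L^2(\mu)}$ is in hand, the reduction to Theorem \ref{thm2} is essentially a formal substitution.
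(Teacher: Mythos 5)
Your proposal is correct and follows essentially the same route as the paper's proof: a GD stability argument bounding $\|\theta^m_{n+1}-\theta^M_{n+1}\|_2\leq 2L\sum_{t=n-r+1}^{n}\eta_t$ (non-expansiveness on the $n-r$ shared steps, a $2L\eta_t$ increment on the $r$ perturbed steps), conversion to a pointwise bound on $|m(x)-M(x)|$ via $L$-Lipschitzness and left $\xi$-admissibility, and then substitution into Theorem \ref{thm2} with $\nu=0$ and $\ell\to\infty$ so the $\exp(-\ell^2/2)$ term vanishes. The only cosmetic difference is that on the perturbed steps the paper bounds the increment by the triangle inequality using $\eta_tL$-boundedness of both updates, while you split off the common update and bound the gradient difference directly; both give the same recursion, and your observation about the $\beta$ versus $\alpha$ mismatch in the step-size condition is a fair catch of a notational slip in the statement.
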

\begin{proof}
    Consider the gradient updates $ G_1^m,G_2^m,\ldots,G_n^m$ and $G^M_1,G_2^M,\ldots,G^M_n$ induced by running \textbf{GD} on $S_1,S_2$. For \textbf{GD}, $m$ and $M$ are parameterized as $\theta^m = \theta^m_{n+1} = G_n^m(\theta^m_n),\theta^M = \theta^M_{n+1} = G_n^M(\theta^M_n)$.
    We define $\delta_t = \| \theta^m_t - \theta^M_t \|_2$,  where $\theta^m_t,\theta^M_t $ are the model parameters trained on $S_1$ and $S_2$ after the $(t-1)$-th gradient update, respectively. 
    
    Note that in the first $n-r$ steps of gradient updates, the examples selected by \textbf{GD} are the same for both $S_1$ and $S_2$. Therefore, in this case, we have $G_t^m=G_t^M $. Thus, we can use Lemma 3.7 in \cite{ruder2016overview} to get 
    \begin{align}
        \delta_t \leq \delta_{t - 1}, t = 1, 2, \ldots, n - r + 1. \label{eq:r1}
    \end{align}
    In the $r$ subsequent gradient updates, the example selected by \textbf{GD} is different. In this case, we use the fact that both $G_t^m$ and $ G_t^M$ are $\eta_tL$-bounded as a consequence of Lemma 3.3 in \cite{ruder2016overview}. At this time, we have
    \begin{align}
    \delta_t  =&  \| G^m_{t - 1} (\theta^m_{t - 1}) - G^M_{t - 1} (\theta^M_{t
      - 1}) \|_2\notag\\
       \leq & \| \theta^m_{t - 1} - \theta^M_{t - 1} \|_2 + \| \theta^m_{t
      - 1} - G^m_{t - 1} (\theta^m_{t - 1}) \|_2
    + \| \theta^M_{t - 1} - G^M_{t -
      1} (\theta^M_{t - 1}) \|_2\notag \\
       \leq&\delta_{t - 1} + 2 \eta_t \cdot L.\label{eq:r2}
    \end{align}
    In summary, we can obtain $ \delta_{n + 1} \leq 2 L \sum_{t = n - r + 1}^{n } \eta_t $ by recursively applying Equation (\ref{eq:r1}) and Equation (\ref{eq:r2}).
    Consequently, for any $z=(x,y)$, we have the following inequality from the Assumption 2:
    \begin{align*}
    | f (\theta^m, z) - f (\theta^M, z) | \leq L \cdot \delta_{n + 1}
    \leq 2 L^2 \sum_{t = n - r + 1}^{n } \eta_t . 
    \end{align*}
    Furthermore, by left $\xi$-admissible condition, we obtain
    \begin{align}
    |m (x) - M (x)|  \leq 1 / \xi \cdot | f (\theta^m, z) - f
      (\theta^M, z) |
       \leq \frac{2 L^2}{\xi} \sum_{t = n - r + 1}^{n } \eta_t .\label{eq:mMbias}
    \end{align}
    Hence, we get $\| m - M \|_{L^2 (\mu)} =\left ( \mathbb{E}_X [(m(X)-M(X)^2] \right )^{\frac{1}{2} }  
    \leq  \frac{2 L^2}{\xi} \sum_{t = n
    - r + 1}^{n} \eta_t$. 
    Moreover, since $m$ can only change to $M$ under the current model change, so $\phi (\lambda)=0$.
    This implies that $ \| m - M \|_{L^2 (\mu)}$  is a $0$-subgaussian random variable. This proves that the model change under consideration can be viewed as a generally-occurring model change. Take
    $
    \delta =  \frac{2 L^2}{\xi} \sum_{t = n
    - r + 1}^{n} \eta_t, \nu = 0, \ell \rightarrow \infty
    $.
    By substituting them into Theorem \ref{thm2}, we finally get:
    \begin{align*}
     \Pr& \Bigg( \frac{1}{k} \sum_{i = 1}^k  m (X_i) - M (x)   \geq \frac{\gamma}{k}\sum_{i
      = 1}^k \| x - X_i \|_2 + \varepsilon \notag\\
    +& \left ( \frac{2 L^2}{\xi} \sum_{t = n
    - r + 1}^{n } \eta_t \right )  \cdot
      \kappa (\tilde{\mu}, \mu)\Bigg)
    \leq 2 \exp \left(- \frac{\varepsilon^2 k}{2}  \right).
    \end{align*}
    Thus, the proof is complete.$\hfill\square$
\end{proof}

As shown in Equation (\ref{eq:mMbias}), the model change from $m(\cdot)$ to $M(\cdot)$ is not a naturally-occurring model change. However, we have shown that it can be considered as a generally-occurring model change. Therefore, our simple example to a certain extent demonstrates that generally-occurring model changes have a wider range of applicability.
\section{Conclusion}
In this paper, we generalize the concept of naturally-occurring model change to a more general model parameter change concept, namely generally-occurring model change. We prove probabilistic guarantees for generally-occurring model change. In addition, we consider the specific problem of dataset perturbation, and use the probabilistic guarantees for generally-occurring model change to give relevant theoretical results based on optimization theory. This example shows that generally-occurring model change has a wider range of applications.

\textbf{Future Work.}
 However, there are still many areas for improvement in our work. First, it is an important problem to understand or improve the parameter $\kappa(\tilde{\mu},\mu)$ in Theorem \ref{thm2} in a more natural way. For this problem, we think it is a feasible idea to consider it from the perspective of information geometry. In addition, it is also a challenging problem to extend Theorem \ref{thm3} to more popular optimization algorithms such as \textbf{S}tochastic \textbf{G}radient \textbf{D}escent (\textbf{SGD}), instead of \textbf{GD}, as considered in Section \ref{sec:GS}.

\subsubsection{Acknowledgement.}
This work is supported by the National Key Research
and Development Program of China (Grant No.2022YFB3103702).

\bibliographystyle{splncs04}
\bibliography{ref}

\end{document}